\newcommand{\E}{\mathbb{E}}
\newtheorem*{theorem*}{Theorem}
\newtheorem{theorem}{Theorem}[section]
\newtheorem{lemma}[theorem]{Lemma}
\newtheorem{fact}[theorem]{Fact}
\newtheorem{claim}[theorem]{Claim}
\newtheorem{assumption}[theorem]{Assumption}
\newtheorem{definition}[theorem]{Definition}
\DeclareMathOperator*{\argmin}{arg\,min}
\DeclareMathOperator*{\dmi}{DMI}
\newcommand{\notleftright}{\mathrel{\ooalign{$\Leftrightarrow$\cr\hidewidth$/$\hidewidth}}}
\DeclareRobustCommand\onedot{\futurelet\@let@token\@onedot}
\def\@onedot{\ifx\@let@token.\else.\null\fi\xspace}
\def\eg{\emph{e.g}\onedot} 
\def\ie{\emph{i.e}\onedot}
\def\etal{\emph{et al}\onedot}
\def\setstretch#1{\renewcommand{\baselinestretch}{#1}}
\title{$\mathcal{L}_{\dmi}$: A Novel Information-theoretic Loss Function for Training Deep Nets Robust to Label Noise}
 \author{
  Yilun Xu\thanks{Equal Contribution.} , Peng Cao\footnotemark[1] \\
School of Electronics Engineering and Computer Science, Peking University\\
\texttt{\{xuyilun,caopeng2016\}@pku.edu.cn} \\
\And
Yuqing Kong \\
The Center on Frontiers of Computing Studies,\\
Computer Science Dept., Peking University \\
\texttt{yuqing.kong@pku.edu.cn} \\
\And
Yizhou Wang \\
Computer Science Dept., Peking University \\
Deepwise AI Lab \\
\texttt{Yizhou.Wang@pku.edu.cn}
}
\begin{document}

\maketitle

\begin{abstract}
Accurately annotating large scale dataset is notoriously expensive both in time and in money. Although acquiring low-quality-annotated dataset can be much cheaper, it often badly damages the performance of trained models when using such dataset without particular treatment. Various methods have been proposed for learning with noisy labels. However, most methods only handle limited kinds of noise patterns, require auxiliary information or steps (\eg, knowing or estimating the noise transition matrix), or lack theoretical justification. In this paper, we propose a novel information-theoretic loss function, $\mathcal{L}_{\dmi}$, for training deep neural networks robust to label noise. The core of $\mathcal{L}_{\dmi}$ is a generalized version of mutual information, termed Determinant based Mutual Information (DMI), which is not only information-monotone but also relatively invariant. \emph{To the best of our knowledge, $\mathcal{L}_{\dmi}$ is the first loss function that is provably robust to instance-independent label noise, regardless of noise pattern, and it can be applied to any existing classification neural networks straightforwardly without any auxiliary information}. In addition to theoretical justification, we also empirically show that using $\mathcal{L}_{\dmi}$ outperforms all other counterparts in the classification task on both image dataset and natural language dataset include Fashion-MNIST, CIFAR-10, Dogs vs. Cats, MR with a variety of synthesized noise patterns and noise amounts, as well as a real-world dataset Clothing1M. Codes are available at \hyperlink{https://github.com/Newbeeer/L\_DMI}{https://github.com/Newbeeer/L\_DMI}.

\end{abstract}

\section{Introduction}
Deep neural networks, together with large scale accurately annotated datasets, have achieved remarkable performance in a great many classification tasks in recent years (\eg, \cite{krizhevsky2012imagenet, he2016deep}). However, it is usually money- and time- consuming to find experts to annotate labels for large scale datasets. While collecting labels from crowdsourcing platforms like Amazon Mechanical Turk is a potential way to get annotations cheaper and faster, the collected labels are usually very noisy. 
The noisy labels hampers the performance of deep neural networks since the commonly used cross entropy loss is not noise-robust. This raises an urgent demand on designing noise-robust loss functions.

Some previous works have proposed several loss functions for training deep neural networks with noisy labels. However, they either use auxiliary information\cite{patrini2017making, hendrycks2018using}(\eg, having an additional set of clean data or the noise transition matrix) or steps\cite{liu2015classification, scott2015rate}(\eg estimating the noise transition matrix), or make assumptions on the noise \cite{ghosh2017robust, zhang2018generalized} and thus can only handle limited kinds of the noise patterns (see perliminaries for definition of different noise patterns).

One reason that the loss functions used in previous works are not robust to a certain noise pattern, say diagonally non-dominant noise, is that they are distance-based, \ie, the loss is the distance between the classifier's outputs and the labels (\eg 0-1 loss, cross entropy loss). When datapoints are labeled by a careless annotator who tends to label the a priori popular class (\eg For medical images, given the prior knowledge is $10\%$ malignant and $90\%$ benign, a careless annotator labels ``benign'' when the underline true label is ``benign'' and labels ``benign'' with 90\% probability when the underline true label is ``malignant''.), the collected noisy labels have a diagonally non-dominant noise pattern and are extremely biased to one class (``benign''). In this situation, the distanced-based losses will prefer the ``meaningless classifier" who always outputs the a priori popular class (``benign'') than the classifier who outputs the true labels.

To address this issue, instead of using distance-based losses, we propose to employ information-theoretic loss such that the classifier, whose outputs have the highest mutual information with the labels, has the lowest loss. The key observation is that the ``meaningless classifier" has no information about anything and will be naturally eliminated by the information-theoretic loss. Moreover, the information-monotonicity of the mutual information guarantees that adding noises to a classifier's output will make this classifier less preferred by the information-theoretic loss. 

However, the key observation is not sufficient. In fact, we want an information measure $\text{I}$ to satisfy
\begin{align*}
    & \text{I} ( \text{classifier 1's output}; \text{noisy labels})>\text{I} (\text{classifier 2's output}; \text{noisy labels}) \\
    \Leftrightarrow &\text{I} (\text{classifier 1's output}; \text{clean labels})> \text{I} (\text{classifier 2's output}; \text{clean labels}).
\end{align*}
Unfortunately, the traditional Shannon mutual information (MI) does not satisfy the above formula, while we find that a generalized information measure, namely, DMI (Determinant based Mutual Information), satisfies the above formula. Like MI, DMI measures the correlation between two random variables. It is defined as the determinant of the matrix that describes the joint distribution over the two variables. Intuitively, when two random variables are independent, their joint distribution matrix has low rank and zero determinant. Moreover, DMI is not only information-monotone like MI, but also relatively invariant because of the multiplication property of the determinant. The relative invariance of DMI makes it satisfy the above formula.

Based on DMI, we propose a noise-robust loss function $\mathcal{L}_{\dmi}$ which is simply \[\mathcal{L}_{\dmi}(\text{data};\text{classifier}):=-\log[\dmi(\text{classifier's output}; \text{labels})].\]

As shown in theorem~\ref{thm:main} later, with $\mathcal{L}_{\dmi}$, the following equation holds: \[\mathcal{L}_{\dmi}(\text{noisy data};\text{classifier}) = \mathcal{L}_{\dmi}(\text{clean data}; \text{classifier}) + \text{noise amount},\] and the noise amount is a constant given the dataset. The equation reveals that \emph{with $\mathcal{L}_{\dmi}$, training with the noisy labels is theoretically equivalent with training with the clean labels in the dataset, regardless of the noise patterns, including the noise amounts}.

In summary, we propose a novel information theoretic noise-robust loss function $\mathcal{L}_{\dmi}$ based on a generalized information measure, DMI. Theoretically we show that $\mathcal{L}_{\dmi}$ is robust to instance-independent label noise. As an additional benefit, it can be easily applied to any existing classification neural networks straightforwardly without any auxiliary information. Extensive experiments have been done on both image dataset and natural language dataset including Fashion-MNIST, CIFAR-10, Dogs vs. Cats, MR with a variety of synthesized noise patterns and noise amounts as well as a real-world dataset Clothing1M. The results demonstrate the superior performance of $\mathcal{L}_{\dmi}$.

\section{Related Work}
A series of works have attempted to design noise-robust loss functions. In the context of binary classification, some loss functions (\eg, 0-1 loss\cite{manwani2013noise}, ramp loss\cite{brooks2011support}, unhinged loss\cite{van2015learning}, savage loss\cite{masnadi2009design}) have been proved to be robust to uniform or symmetric noise and Natarajan \etal \cite{natarajan2013learning} presented a general way to modify any given surrogate loss function. Ghosh \etal \cite{ghosh2017robust} generalized the existing results for binary classification problem to multi-class classification problem and proved that MAE (Mean Absolute Error) is robust to diagonally dominant noise. Zhang \etal \cite{zhang2018generalized} showed MAE performs poorly with deep neural network and they combined MAE and cross entropy loss to obtain a new loss function. Patrini \etal \cite{patrini2017making} provided two kinds of loss correction methods with knowing the noise transition matrix. The noise transition matrix sometimes can be estimated from the noisy data \cite{scott2015rate,liu2015classification,ramaswamy2016mixture}. Hendrycks \etal \cite{hendrycks2018using} proposed another loss correction technique with an additional set of clean data. To the best of our knowledge, we are the first to provide a loss function that is provably robust to instance-independent label noise without knowing the transition matrix, regardless of noise pattern and noise amount. 

Instead of designing an inherently noise-robust function, several works used special architectures to deal with the problem of training deep neural networks with noisy labels. Some of them focused on estimating the noise transition matrix to handle the label noise and proposed a variety of ways to constrain the optimization \cite{sukhbaatar2014training, xiao2015learning, goldberger2016training,vahdat2017toward,han2018masking,yao2019safeguarded}. Some of them focused on finding ways to distinguish noisy labels from clean labels and used example re-weighting strategies to give the noisy labels less weights \cite{reed2014training, ren2018learning, ma2018dimensionality}. While these methods seem to perform well in practice, they cannot guarantee the robustness to label noise theoretically and are also outperformed by our method empirically.

On the other hand, Zhang \etal \cite{zhang2016understanding} have shown that deep neural networks can easily memorize completely random labels, thus several works propose frameworks to prevent this overfitting issue empirically in the setting of deep learning from noisy labels. For example, teacher-student curriculum learning framework \cite{jiang2017mentornet} and co-teaching framework \cite{han2018co} have been shown to be helpful. Multi-task frameworks that jointly estimates true labels and learns to classify images are also introduced \cite{veit2017learning,lee2018cleannet, tanaka2018joint, yi2019probabilistic}. Explicit and implicit regularization methods can also be applied \cite{zhang2017mixup,miyato2018virtual}. We consider a different perspective from them and focus on designing an inherently noise-robust function. 

In this paper, we only consider instance-independent noise. There are also some works that investigate instance-dependent noise model (e.g. \cite{cheng2017learning, menon2016learning}). They focus on the binary setting and assume that the noisy and true labels agree on average. 



\section{Preliminaries}
\subsection{Problem settings}

We denote the set of classes by $\mathcal{C}$ and the size of $\mathcal{C}$ by $C$. We also denote the domain of datapoints by $\mathcal{X}$. A classifier is denoted by $h: \mathcal{X}\mapsto \Delta_{\mathcal{C}}$, where $\Delta_{\mathcal{C}}$ is the set of all possible distributions over $\mathcal{C}$. $h$ represents a randomized classifier such that given $x\in \mathcal{X}$, $h(x)_c$ is the probability that $h$ maps $x$ into class $c$. Note that fixing the input $x$, the randomness of a classifier is independent of everything else.


There are $N$ datapoints $\{x_i\}_{i=1}^N$. For each datapoint $x_i$, there is an \emph{unknown} ground truth $y_i\in \mathcal{C}$. We assume that there is an unknown prior distribution $Q_{X,Y}$ over $\mathcal{X}\times \mathcal{C}$ such that $\{(x_i,y_i)\}_{i=1}^N$ are i.i.d. samples drawn from $Q_{X,Y}$ and 
\[Q_{X,Y}(x,y)=\Pr[X=x,Y=y].\]

Note that here we allow the datapoints to be ``imperfect'' instances, \ie, there still exists uncertainty for $Y$ conditioning on fully knowing $X$. 

Traditional supervised learning aims to train a classifier $h^*$ that is able to classify new datapoints into their ground truth categories with access to $\{(x_i,y_i)\}_{i=1}^N$. However, in the setting of learning with noisy labels, instead, we \emph{only} have access to $\{(x_i,\tilde{y}_i)\}_{i=1}^N$ where $\tilde{y}_i$ is a noisy version of $y_i$. 

We use a random variable $\tilde{Y}$ to denote the noisy version of $Y$ and $T_{Y\rightarrow \tilde{Y}}$ to denote the transition distribution between $Y$ and $\title{Y}$, \ie 
\[T_{Y\rightarrow \tilde{Y}}(y,\tilde{y})=\Pr[\tilde{Y}=\tilde{y}|Y=y].\]

We use $\mathbf{T}_{Y\rightarrow \tilde{Y}}$ to represent the $C\times C$ matrix format of $T_{Y\rightarrow \tilde{Y}}$. 

Generally speaking \cite{patrini2017making, ghosh2017robust,  zhang2018generalized}, label noise can be divided into several kinds according to the noise transition matrix $\mathbf{T}_{Y\rightarrow \tilde{Y}}$. It is defined as \emph{class-independent (or uniform)} if a label is substituted by a uniformly random label regardless of the classes, \ie $\Pr[\tilde{Y}=\tilde{c} | Y=c] = \Pr[\tilde{Y}=\tilde{c}' | Y=c], \forall \tilde{c},\tilde{c}' \neq c$ (e.g. $\mathbf{T}_{Y \to \tilde{Y}} = \begin{bmatrix}
    0.7  & 0.3 \\
    0.3 & 0.7 \\
    \end{bmatrix}
    $). It is defined as \emph{diagonally dominant} if for every row of  $\mathbf{T}_{Y\rightarrow \tilde{Y}}$, the magnitude of the diagonal entry is larger than any non-diagonal entry, \ie $\Pr[\tilde{Y}=c | Y=c] > \Pr[\tilde{Y}=\tilde{c} | Y=c], \forall \tilde{c}\neq c$ (e.g. $\mathbf{T}_{Y \to \tilde{Y}} = \begin{bmatrix}
    0.7  & 0.3 \\
    0.2 & 0.8 \\
    \end{bmatrix}
    $).
It is defined as \emph{diagonally non-dominant} if it is not diagonally dominant (e.g. the example mentioned in introduction, $\mathbf{T}_{Y \to \tilde{Y}} = \begin{bmatrix}
    1  & 0 \\
    0.9 & 0.1 \\
    \end{bmatrix}
    $). 

We assume that the noise is independent of the datapoints conditioning on the ground truth, which is commonly assumed in the literature \cite{patrini2017making, ghosh2017robust, zhang2018generalized}, \ie, 

\begin{assumption}[Independent noise]
	$X$ is independent of $\tilde{Y}$ conditioning on $Y$. 
\end{assumption}



 
 
We also need that the noisy version $\tilde{Y}$ is still informative. 
 
 \begin{assumption}[Informative noisy label]
	$\mathbf{T}_{Y\rightarrow \tilde{Y}}$ is invertible, \ie, $\det(\mathbf{T}_{Y\rightarrow \tilde{Y}})\neq 0$.
\end{assumption}




\subsection{Information theory concepts}

Since Shannon's seminal work \cite{shannon}, information theory has shown its powerful impact in various of fields, including several recent deep learning works~\cite{hjelm2018learning, cao2018max,kong2018water}. Our work is also inspired by information theory. This section introduces several basic information theory concepts. 

Information theory is commonly related to random variables. For every random variable $W_1$, Shannon's entropy $\text{H}(W_1) := \sum_{w_1} \Pr[W=w_1]\log\Pr[W=w_1]$ measures the uncertainty of $W_1$. For example, deterministic $W_1$ has lowest entropy. For every two random variables $W_1$ and $W_2$, Shannon mutual information $\text{MI}(W_1,W_2):=\sum_{w_1,w_2}\Pr[W_1=w_1,W_2=w_2]\log\frac{\Pr[W=w_1,W=w_2]}{\Pr[W_1=w_1]\Pr[W_2=w_2]}$ measures the amount of relevance between $W_1$ and $W_2$. For example, when $W_1$ and $W_2$ are independent, they have the lowest Shannon mutual information, zero. 

Shannon mutual information is \emph{non-negative}, \emph{symmetric}, \ie, $\text{MI}(W_1,W_2)=\text{MI}(W_2,W_1)$, and also satisfies a desired property, information-monotonicity, \ie, the mutual information between $W_1$ and $W_2$ will always decrease if either $W_1$ or $W_2$ has been ``processed''. 

\begin{fact}[Information-monotonicity \cite{csiszar2004information}]
	For all random variables $W_1,W_2,W_3$, when $W_3$ is less informative for $W_2$ than $W_1$, \ie, $W_3$ is independent of $W_2$ conditioning $W_1$, \[\text{MI}(W_3,W_2)\leq \text{MI}(W_1,W_2).\]
\end{fact}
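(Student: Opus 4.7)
The plan is to deduce this data-processing inequality directly from the chain rule for Shannon mutual information together with the conditional-independence hypothesis; no additional machinery beyond the definitions given in the preliminaries is needed.

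First, I would establish the chain rule in two forms by writing the joint $\Pr[W_1,W_2,W_3]$ as a product of conditionals and rearranging the defining sum for $\text{MI}(W_2;(W_1,W_3))$. This produces the identity
\[
\text{MI}(W_2;(W_1,W_3)) \;=\; \text{MI}(W_2;W_1)+\text{MI}(W_2;W_3\mid W_1) \;=\; \text{MI}(W_2;W_3)+\text{MI}(W_2;W_1\mid W_3),
\]
where the conditional mutual information is defined by summing the pointwise log-ratio against the full joint distribution. Both equalities are symmetric manipulations of the same quantity, so proving one by direct computation yields the other by relabeling.

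Second, I would invoke the hypothesis $W_3 \perp W_2 \mid W_1$, which rewrites as $\Pr[W_3\mid W_1,W_2]=\Pr[W_3\mid W_1]$. Substituting into the definition of $\text{MI}(W_2;W_3\mid W_1)$ makes every log term vanish, so $\text{MI}(W_2;W_3\mid W_1)=0$. Equating the two expressions from the chain rule then gives
\[
\text{MI}(W_2;W_1)\;=\;\text{MI}(W_2;W_3)+\text{MI}(W_2;W_1\mid W_3).
\]

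Third, I would appeal to the non-negativity of conditional mutual information: for each fixed value of $W_3$, the inner sum is a KL divergence between the joint distribution of $(W_1,W_2)$ given $W_3$ and the product of its marginals given $W_3$, and Jensen's inequality applied to $-\log$ forces this to be non-negative. Thus $\text{MI}(W_2;W_1\mid W_3)\geq 0$, and the displayed identity yields $\text{MI}(W_3,W_2)\leq \text{MI}(W_1,W_2)$, which is the claim.

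This is a classical result and there is no substantive obstacle; the only points that require care are the symmetric use of the Markov condition (writing $W_3\perp W_2\mid W_1$ in the form $\Pr[W_3\mid W_1,W_2]=\Pr[W_3\mid W_1]$ so that it cancels a log term cleanly) and the bookkeeping in the chain-rule derivation. If a more streamlined proof is preferred, one can instead cite the fact stated in the excerpt directly and simply verify the substitution, but the outline above is fully self-contained.
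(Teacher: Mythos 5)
Your proof is correct, but note that the paper does not actually prove this Fact at all---it is stated as a known result with a citation to Csisz\'ar and Shields, so there is no in-paper argument to compare against. Your route is the classical one: expand $\text{MI}(W_2;(W_1,W_3))$ by the chain rule in both orders, use the Markov condition $W_3\perp W_2\mid W_1$ to kill the term $\text{MI}(W_2;W_3\mid W_1)$, and conclude from non-negativity of $\text{MI}(W_2;W_1\mid W_3)$ (a conditional KL divergence). All three steps are standard and correctly justified; the one point worth stating explicitly in a written-up version is that conditional independence is exactly the condition $\Pr[W_2,W_3\mid W_1]=\Pr[W_2\mid W_1]\Pr[W_3\mid W_1]$, which is what makes every log-ratio in $\text{MI}(W_2;W_3\mid W_1)$ vanish. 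It is worth contrasting your argument with how the paper proves the \emph{analogous} monotonicity for DMI (Lemma 3.5): there the mechanism is entirely different---multiplicativity of the determinant plus the bound $|\det(\mathbf{T})|\leq 1$ for transition matrices---and it yields the stronger \emph{relative invariance} identity $\dmi(W_2,W_3)=\dmi(W_2,W_1)\,|\det(\mathbf{T}_{W_1\rightarrow W_3})|$, of which monotonicity is a corollary. Shannon MI admits no such exact identity, which is precisely why the paper needs DMI; your chain-rule proof only delivers the inequality, never an equality with a classifier-independent correction term.
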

This property naturally induces that for all random variables $W_1, W_2$, \[\text{MI}(W_1,W_2)\leq \text{MI}(W_2,W_2)=\text{H}(W_2)\] since $W_2$ is always the most informative random variable for itself. 

Based on Shannon mutual information, a performance measure for a classifier $h$ can be naturally defined. High quality classifier's output $h(X)$ should have high mutual information with the ground truth category $Y$. Thus, a classifier $h$'s performance can be measured by $\text{MI}(h(X),Y)$. 

However, in our setting, we only have access to the i.i.d. samples of $h(X)$ and $\tilde{Y}$. A natural attempt is to measure a classifier $h$'s performance by $\text{MI}(h(X),\tilde{Y})$. Unfortunately, under this performance measure, the measurement based on noisy labels $\text{MI}(h(X),\tilde{Y})$ may not be consistent with the measurement based on true labels $\text{MI}(h(X),Y)$. (See a counterexample in Supplementary Material B.) That is,
\[ \forall h,h', \text{MI}(h(X),Y)>\text{MI}(h'(X),Y) \notleftright \text{MI}(h(X),\tilde{Y})> \text{MI}(h'(X),\tilde{Y}). \] Thus, we cannot use Shannon mutual information as the performance measure for classifiers. Here we find that, a generalized mutual information, Determinant based Mutual Information (DMI) \cite{Kong2019}, satisfies the above formula such that under the performance measure based on DMI, the measurement based on noisy labels is consistent with the measurement based on true labels.


\begin{definition}[Determinant based Mutual Information \cite{Kong2019}]\label{def:dmi}
    Given two discrete random variables $W_1, W_2$, we define the Determinant based Mutual Information between $W_1$ and $W_2$ as \[\dmi(W_1, W_2)=|\det(\mathbf{Q}_{W_1,W_2})|\]
    where $\mathbf{Q}_{W_1,W_2}$ is the matrix format of the joint distribution over $W_1$ and $W_2$.   
\end{definition} 

DMI is a generalized version of Shannon's mutual information: it preserves all properties of Shannon mutual information, including non-negativity, symmetry and information-monotonicity and it is additionally relatively invariant. DMI is initially proposed to address a mechanism design problem \cite{Kong2019}. 


\begin{lemma}[Properties of DMI \cite{Kong2019}]
	DMI is non-negative, symmetric and information-monotone. Moreover, it is relatively invariant: for all random variables $W_1, W_2, W_3$, when $W_3$ is less informative for $W_2$ than $W_1$, \ie, $W_3$ is independent of $W_2$ conditioning $W_1$,  \[\dmi(W_2,W_3)=\dmi(W_2,W_1)|\det(\mathbf{T}_{W_1\rightarrow W_3})|\] where $\mathbf{T}_{W_1\rightarrow W_3}$ is the matrix format of \[T_{W_1 \rightarrow W_3}(w_1,w_3)=\Pr[W_3=w_3|W_1=W_1].\]  
\end{lemma}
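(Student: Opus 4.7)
The plan is to treat the four properties in order of increasing depth. Non-negativity is immediate from the absolute value in the definition, and symmetry follows from $\det(\mathbf{A})=\det(\mathbf{A}^\top)$ applied to the observation that $\mathbf{Q}_{W_1,W_2}^\top = \mathbf{Q}_{W_2,W_1}$. So the real content lies in relative invariance and in information-monotonicity, and I would set things up so that the latter falls out of the former.

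For relative invariance, the key algebraic step is to show a matrix factorization: under the conditional independence assumption that $W_3\perp W_2 \mid W_1$,
\begin{equation*}
\mathbf{Q}_{W_2,W_3} \;=\; \mathbf{Q}_{W_2,W_1}\,\mathbf{T}_{W_1\rightarrow W_3}.
\end{equation*}
I would derive this by expanding the $(w_2,w_3)$ entry via the law of total probability over $W_1$, factoring $\Pr[W_2{=}w_2,W_1{=}w_1,W_3{=}w_3]$ as $\Pr[W_2{=}w_2,W_1{=}w_1]\,\Pr[W_3{=}w_3\mid W_1{=}w_1,W_2{=}w_2]$, and then dropping $W_2$ from the conditioning by the independence assumption. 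Recognizing the resulting sum as a matrix product gives the factorization. Taking $|\det(\cdot)|$ of both sides and using multiplicativity of the determinant yields
\begin{equation*}
\dmi(W_2,W_3) \;=\; \dmi(W_2,W_1)\,\bigl|\det(\mathbf{T}_{W_1\rightarrow W_3})\bigr|,
\end{equation*}
which is the claimed relative invariance (after using symmetry to swap arguments if needed to match the statement).

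Finally, information-monotonicity is a corollary: since $\mathbf{T}_{W_1\rightarrow W_3}$ is a (row-)stochastic matrix, each of its eigenvalues lies in the closed unit disk, so $|\det(\mathbf{T}_{W_1\rightarrow W_3})|\le 1$, and the relative-invariance identity then gives $\dmi(W_2,W_3)\le \dmi(W_2,W_1)$. I expect the only non-trivial step to be the matrix factorization; the determinant bound on stochastic matrices is standard (via the spectral-radius-at-most-one property, or equivalently via Gershgorin), and everything else is routine linear algebra. No new assumptions beyond conditional independence are needed, and the argument uses nothing about $W_1,W_2,W_3$ beyond that they are discrete with finitely many values so that the joint and transition matrices are well-defined square (or rectangular, in which case the conclusion requires the square case guaranteed here by the context of the paper) objects.
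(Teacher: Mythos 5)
Your proposal is correct and follows essentially the same route as the paper: derive the factorization $\mathbf{Q}_{W_2,W_3}=\mathbf{Q}_{W_2,W_1}\mathbf{T}_{W_1\rightarrow W_3}$ from the law of total probability and conditional independence, apply multiplicativity of the determinant, and deduce monotonicity from $|\det(\mathbf{T}_{W_1\rightarrow W_3})|\leq 1$. The only cosmetic difference is that you justify the determinant bound for stochastic matrices directly (via the spectral radius) where the paper cites a reference, and you spell out the transpose argument for symmetry that the paper leaves implicit.
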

\begin{proof}
The non-negativity and symmetry follow directly from the definition, so we only need to prove the relatively invariance. Note that
 \[\Pr_{Q_{W_2,W_3}}[W_2=w_2,,W_3=w_3]=\sum_{w_1} \Pr_{Q_{W_1,W_2}}[W_2=w_2, W_1=w_1]\Pr[W_3=w_3|W_1=w_1].\]
as $W_3$ is independent of $W_2$ conditioning on $W_1$. 
Thus, \[\mathbf{Q}_{W_2,W_3}=\mathbf{Q}_{W_2,W_1} \mathbf{T}_{W_1 \rightarrow W_3}\] where $\mathbf{Q}_{W_2,W_3}$, $\mathbf{Q}_{W_2,W_1}$, $\mathbf{T}_{W_1 \rightarrow W_3}$ are the matrix formats of $Q_{W_2,W_3}$, $Q_{W_2,W_1}$, $T_{W_1 \rightarrow W_3}$, respectively. 
We have \[\det(\mathbf{Q}_{W_2,W_3}) = \det(\mathbf{Q}_{W_2,W_1}) \det (\mathbf{T}_{W_1 \rightarrow W_3}) \] because of the multiplication property of the determinant (\ie $\det(\mathbf{AB})=\det(\mathbf{A})\det(\mathbf{B})$ for every two matrices $\mathbf{A}, \mathbf{B}$). Therefore, $\dmi(W_2,W_3)=\dmi(W_2,W_1)|\det(\mathbf{T}_{W_1\rightarrow W_3})|$.

The relative invariance and the symmetry imply the information-monotonicity of DMI. When $W_3$ is less informative for $W_2$ than $W_1$, \ie, $W_3$ is independent of $W_2$ conditioning on $W_1$, 
\begin{align*}
    \dmi(W_3,W_2) = \dmi(W_2,W_3) &= \dmi(W_2,W_1)|\det(\mathbf{T}_{W_1\rightarrow W_3})|\\
     &\leq \dmi(W_2,W_1) = \dmi(W_1,W_2)
\end{align*} because of the fact that for every square transition matrix $\mathbf{T}$, $\det(\mathbf{T})\leq 1$ \cite{seneta2006non}. 
\end{proof}

Based on DMI, an information-theoretic performance measure for each classifier $h$ is naturally defined as $\dmi(h(X),\tilde{Y})$. Under this performance measure, the measurement based on noisy labels $\dmi(h(X),\tilde{Y})$ is consistent with the measurement based on clean labels $\dmi(h(X),Y)$, \ie, for every two classifiers $h$ and $h'$, 
\[ \dmi(h(X),Y)>\dmi(h'(X),Y) \Leftrightarrow \dmi(h(X),\tilde{Y})> \dmi(h'(X),\tilde{Y}). \]

\section{$\mathcal{L}_{\dmi}$: An Information-theoretic Noise-robust Loss Function}
\label{ldmi}
\subsection{Method overview}


Our loss function is defined as 
\[ \mathcal{L}_{\dmi}(Q_{h(X),\tilde{Y}}):=-\log (\dmi(h(X),\tilde{Y}))=-\log (|\det(\mathbf{Q}_{h(X),\tilde{Y}})|) \]
where $Q_{h(X),\tilde{Y}}$ is the joint distribution over $h(X),\tilde{Y}$ and $\mathbf{Q}_{h(X),\tilde{Y}}$ is the $C\times C$ matrix format of $Q_{h(X),\tilde{Y}}$. The randomness $h(X)$ comes from both the randomness of $h$ and the randomness of $X$. The $\log$ function here resolves many scaling issues
\footnote{$\frac{\partial( c |\det(\mathbf{A})|)}{ \partial\mathbf{A}}=c |\det(\mathbf{A})|(\mathbf{A}^{-1})^T$ while $\frac{\partial \log(c |\det(\mathbf{A})|)}{ \partial\mathbf{A}}=(\mathbf{A}^{-1})^T$, $\forall$ matrix $\mathbf{A}$ and $\forall$ constant $c$.}.

\begin{figure}[htp]
    \centering
    \includegraphics[width=5.5in]{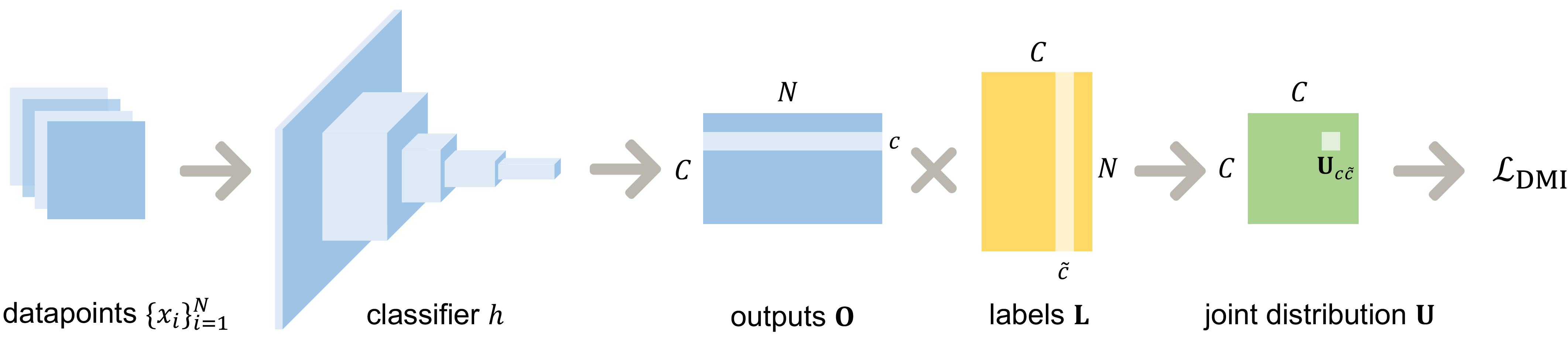}
    \caption{The computation of $\mathcal{L}_{\dmi}$ in each step of iteration}
    \label{fig:whole}
\end{figure}
Figure \ref{fig:whole} shows the computation of $\mathcal{L}_{\dmi}$. In each step of iteration, we sample a batch of datapoints and their noisy labels $\{(x_i, \tilde{y}_i)\}_{i=1}^N$. We denote the outputs of the classifier by a matrix $\mathbf{O}$. Each column of $\mathbf{O}$ is a distribution over $\mathcal{C}$, representing for an output of the classifier. We denote the noisy labels by a 0-1 matrix $\mathbf{L}$. Each row of $\mathbf{L}$ is an one-hot vector, representing for a label. i.e.  \[\mathbf{O}_{ci} = h(x_i)_c, \text{\space} \mathbf{L}_{i\tilde{c}} = \mathbbm{1}[\tilde{y}_i = \tilde{c}], \text{\space} \]

We define $\mathbf{U}:= \frac{1}{N} \mathbf{O}\mathbf{L}$, i.e., 

\[\mathbf{U}_{c\tilde{c}} :=\frac{1}{N}\sum_{i=1}^N \mathbf{O}_{ci}\mathbf{L}_{i\tilde{c}}= \frac{1}{N}\sum_{i=1}^N h(x_i)_c\mathbbm{1}[\tilde{y}_i = \tilde{c}]. \] 

We have $\E \mathbf{U}_{c\tilde{c}}=\Pr[h(X)=c, \tilde{Y} = \tilde{c}]=Q_{h(X),\tilde{Y}}(c,\tilde{c})$ ($\E$ means expectation, see proof in Supplementary Material B). Thus, $\mathbf{U}$ is an empirical estimation of $\mathbf{Q}_{h(X),\tilde{Y}}$. By abusing notation a little bit, we define \[\mathcal{L}_{\dmi}(\{(x_i,\tilde{y_i})\}_{i=1}^N;h) = -\log(|\det(\mathbf{U})|)\] as the empirical loss function. Our formal training process is shown in Supplementary Material A.

\subsection{Theoretical justification}

\begin{theorem}[Main Theorem]
\label{thm:main}
	With Assumption 3.1 and Assumption 3.2, $\mathcal{L}_{\dmi}$ is 
	\begin{description}
	\item[legal] if there exists a ground truth classifier $h^*$ such that $h^*(X)=Y$, then it must have the lowest loss, \ie, for all classifier $h$, \[\mathcal{L}_{\dmi}(Q_{h^*(X),\tilde{Y}})\leq \mathcal{L}_{\dmi}(Q_{h(X),\tilde{Y}})\] and the inequality is strict when $h(X)$ is not a permutation of $h^*(X)$, \ie, there does not exist a permutation $\pi:\mathcal{C}\mapsto\mathcal{C}$ s.t. $h(x)=\pi(h^*(x)),\forall x\in \mathcal{X}$;
	\item[noise-robust]for the set of all possible classifiers $\mathcal{H}$,
	\[\argmin_{h\in \mathcal{H}} \mathcal{L}_{\dmi}(Q_{h(X),\tilde{Y}})=\argmin_{h\in \mathcal{H}} \mathcal{L}_{\dmi}(Q_{h(X),Y})\]
 and in fact, training using noisy labels is the same as training using clean labels in the dataset except a constant shift, 
	\[\mathcal{L}_{\dmi}(Q_{h(X),\tilde{Y}})= \mathcal{L}_{\dmi}(Q_{h(X),Y}) + \alpha;\] 
	\item[information-monotone] for every two classifiers $h,h'$, if $h'(X)$ is less informative for $Y$ than $h(X)$, i.e. $h'(X)$ is independent of $Y$ conditioning on $h(X)$, then \[ \mathcal{L}_{\dmi}(Q_{h(X),\tilde{Y}})\leq \mathcal{L}_{\dmi}(Q_{h(X),Y}).\]
	\end{description}
\end{theorem}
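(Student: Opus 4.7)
The plan is to derive all three parts of the theorem as short consequences of the relative invariance of $\dmi$ proved in the preceding lemma, combined with Assumption 3.1 (which gives the conditional independence $\tilde{Y}$-of-$X$ given $Y$) and Assumption 3.2 (which makes $|\det(\mathbf{T}_{Y\to\tilde{Y}})|$ a strictly positive constant). I would start with the noise-robust part because it is the cleanest and the other parts reduce to it. Since the randomness of a classifier is declared independent of everything else in the problem setup, Assumption 3.1 lifts to conditional independence of $h(X)$ and $\tilde{Y}$ given $Y$, for every classifier $h$. Applying the relative invariance lemma with $W_1=Y$, $W_2=h(X)$, $W_3=\tilde{Y}$ then yields
$$\dmi(h(X),\tilde{Y}) \;=\; \dmi(h(X),Y)\cdot |\det(\mathbf{T}_{Y\to \tilde{Y}})|.$$
Taking $-\log$ and setting $\alpha:=-\log|\det(\mathbf{T}_{Y\to \tilde{Y}})|$ (well-defined and $h$-independent thanks to Assumption 3.2) gives the desired additive relation $\mathcal{L}_{\dmi}(Q_{h(X),\tilde{Y}}) = \mathcal{L}_{\dmi}(Q_{h(X),Y})+\alpha$, from which the argmin identity is immediate.

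For the legal part, the noise-robust identity reduces the problem to showing $\mathcal{L}_{\dmi}(Q_{h^*(X),Y})\leq \mathcal{L}_{\dmi}(Q_{h(X),Y})$ for every $h$, with strict inequality unless $h(X)$ is a permutation of $h^*(X)$. Using $h^*(X)=Y$ and applying relative invariance a second time, now with $W_1=W_2=Y$ and $W_3=h(X)$ (the required conditional independence is trivial in this case), one obtains
$$\dmi(h(X),Y) \;=\; \dmi(Y,Y)\cdot |\det(\mathbf{T}_{Y\to h(X)})|.$$
The cited bound $|\det(\mathbf{T})|\leq 1$ for row-stochastic $\mathbf{T}$ then delivers the inequality, and the well-known characterization of equality in this bound, namely that it holds iff $\mathbf{T}$ is a permutation matrix, delivers the strict-inequality clause: equality occurs exactly when $h(X)$ is a deterministic permutation of $Y=h^*(X)$.

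The information-monotone part is a one-line consequence of the information-monotonicity of $\dmi$ already established in the preceding lemma: under the hypothesis that $h'(X)$ is independent of $Y$ conditioning on $h(X)$, one has $\dmi(h'(X),\tilde{Y})\leq \dmi(h(X),\tilde{Y})$ (and analogously with $Y$ in place of $\tilde{Y}$), and applying the monotone $-\log$ flips this into the claimed loss inequality. The main obstacle I anticipate is not mathematical but book-keeping: one must carefully verify that $h(X)$ inherits the conditional independence from $X$ (this is where the independence of the classifier's internal randomness, noted in the problem setup, is essential) and that each transition matrix appearing in the two invocations of relative invariance is indeed row-stochastic so that the Seneta determinant bound applies. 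No heavy calculation is needed; the theorem is essentially a direct application of the lemma.
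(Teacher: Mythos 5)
Your proof is correct and follows essentially the same route as the paper: all three parts are derived from the relative invariance of $\dmi$, with the legal part obtained by a second application of that lemma (which is just an unrolled version of the paper's appeal to information-monotonicity together with the permutation-matrix characterization of $|\det(\mathbf{T})|=1$). If anything your bookkeeping is more careful than the paper's: you correctly obtain $\alpha=-\log|\det(\mathbf{T}_{Y\rightarrow\tilde{Y}})|\geq 0$ where the paper writes $\alpha=\log|\det(\mathbf{T}_{Y\rightarrow\tilde{Y}})|\leq 0$ (a sign slip given that $\mathcal{L}_{\dmi}=-\log\dmi$), and you explicitly justify lifting the conditional independence of Assumption 3.1 from $X$ to $h(X)$ via the independence of the classifier's internal randomness, a step the paper leaves implicit.
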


\begin{proof}
The relatively invariance of $\dmi$ (Lemma 3.5) implies \[\dmi(h(X),\tilde{Y})=\dmi(h(X),Y)|\det(\mathbf{T}_{Y\rightarrow \tilde{Y}})|.\] Therefore, \[\mathcal{L}_{\dmi}(Q_{h^*(X),\tilde{Y}}) = \mathcal{L}_{\dmi}(Q_{h^*(X),Y}) + \log(|\det(\mathbf{T}_{Y\rightarrow \tilde{Y}})|).\] Thus, the information-monotonicity and the noise-robustness of $\mathcal{L}_{\dmi}$ follows and the constant $\alpha = \log(|\det(\mathbf{T}_{Y\rightarrow \tilde{Y}})|) \leq 0$. 

The legal property follows from the information-monotonicity of $\mathcal{L}_{\dmi}$ as $h^*(X)=Y$ is the most informative random variable for $Y$ itself and the fact that for every square transition matrix $T$, $\det(T)=1$ if and only if $T$ is a permutation matrix \cite{seneta2006non}. 
\end{proof}

\section{Experiments}
We evaluate our method on both synthesized and real-world noisy datasets with different deep neural networks to demonstrate that our method is independent of both architecture and data domain. We call our method \textbf{DMI} and compare it with: \textbf{CE} (the cross entropy loss), \textbf{FW} (the forward loss \cite{patrini2017making}), \textbf{GCE} (the generalized cross entropy loss \cite{zhang2018generalized}), \textbf{LCCN} (the latent class-conditional noise model \cite{yao2019safeguarded}). For the synthesized data, noises are added to the training and validation sets, and test accuracy is computed with respect to true labels. For our method, we pick the best learning rate from $\{1.0\times 10^{-4}, 1.0\times 10^{-5}, 1.0\times 10^{-6}\}$ and the best batch size from $\{128, 256\}$ based on the minimum validation loss. For other methods, we use the best hyperparameters they provided in similar settings. The classifiers are pretrained with cross entropy loss first. All reported experiments were repeated five times. We implement all networks and training procedures in Pytorch \cite{paszke2017tensors}
and conduct all experiments on NVIDIA TITAN Xp GPUs.\footnote{Source codes are available at \url{https://github.com/Newbeeer/L_DMI}.} The explicit noise transition matrices are shown in Supplementary Material \ref{appendix:transition}. Due to space limit, we defer some additional experiments to Supplementary Material \ref{appendix:results}. 

\subsection{An explanation experiment on Fashion-MNIST} 
To compare distance-based and information-theoretic loss functions as we mentioned in the third paragraph in introduction, we conducted experiments on Fashion-MNIST \cite{xiao2017/online}. It consists of 70,000 $28\times28$ grayscale fashion product image from $10$ classes, which is split into a $50,000$-image
training set, a $10,000$-image valiadation set and a $10,000$-image test set. For clean presentation, we only compare our information-theoretic loss function \textbf{DMI} with the distance-based loss function \textbf{CE} here and convert the labels in the dataset to two classes, bags and clothes, to synthesize a highly imbalanced dataset ($10\%$ bags, $90\%$ clothes). We use a simple two-layer convolutional neural network as the classifier. Adam with default parameters and a learning rate of $1.0 \times 10^{-4}$ is used as the optimizer during training. Batch size is set to $128$. 

We synthesize three cases of noise patterns: (1) with probability $r$, a true label is substituted by a random label through uniform sampling. (2) with probability $r$, bags $\to$ clothes, that is, a true label of the a priori less popular class, ``bags'', is flipped to the popular one, ``clothes''. This happens in real world when the annotators are lazy. (\eg, a careless medical image annotator may be more likely to label ``benign'' since most images are in the ``benign'' category.) (3) with probability $r$, clothes $\to$ bags, that is, the a priori more popular class, ``clothes'', is flipped to the other one, ``bags''. This happens in real world when the annotators are risk-avoid and there will be smaller adverse effects if the annotators label the image to a certain class. (\eg a risk-avoid medical image annotator may be more likely to label ``malignant'' since it is usually safer when the annotator is not confident, even if it is less likely a priori.) Note that the parameter $0\leq r \leq 1$ in the above three cases also represents the amount of noise. When $r=0$, the labels are clean and when $r=1$, the labels are totally uninformative. Moreover, in case (2) and (3), as $r$ increases, the noise pattern changes from diagonally dominant to diagonally non-dominant.

 \begin{figure}[h!]
     \centering
     \includegraphics[width=\textwidth]{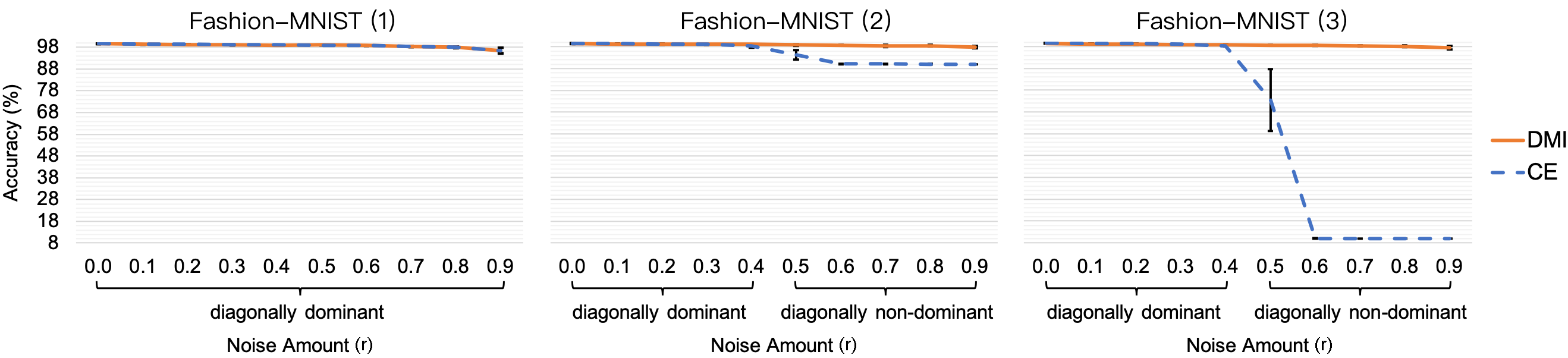}
     \caption{Test accuracy (mean and std. dev.) on Fashion-MNIST.}
     \label{fig:fashion}
 \end{figure}

 As we mentioned in the introduction, distance-based loss functions will perform badly when the noise is non-diagonally dominant and the labels are biased to one class since they prefer the meaningless classifier $h_0$ who always outputs the class who is the majority in the labels. ($\forall x, h_0(x)=$ ``clothes'' and has accuracy $90\%$ in case (2) and $\forall x, h_0(x)=$ ``bags'' and has accuracy $10\%$ in case (3)). The experiment results match our expectation. \textbf{CE} performs similarly with our \textbf{DMI} for diagonally dominant noises. For non-diagonally dominant noises, however,  \textbf{CE} only obtains the meaningless classifier $h_0$ while \textbf{DMI} still performs pretty well.


\subsection{Experiments on CIFAR-10, Dogs vs. Cats and MR}
\begin{figure}
     \centering
     \includegraphics[width=5in]{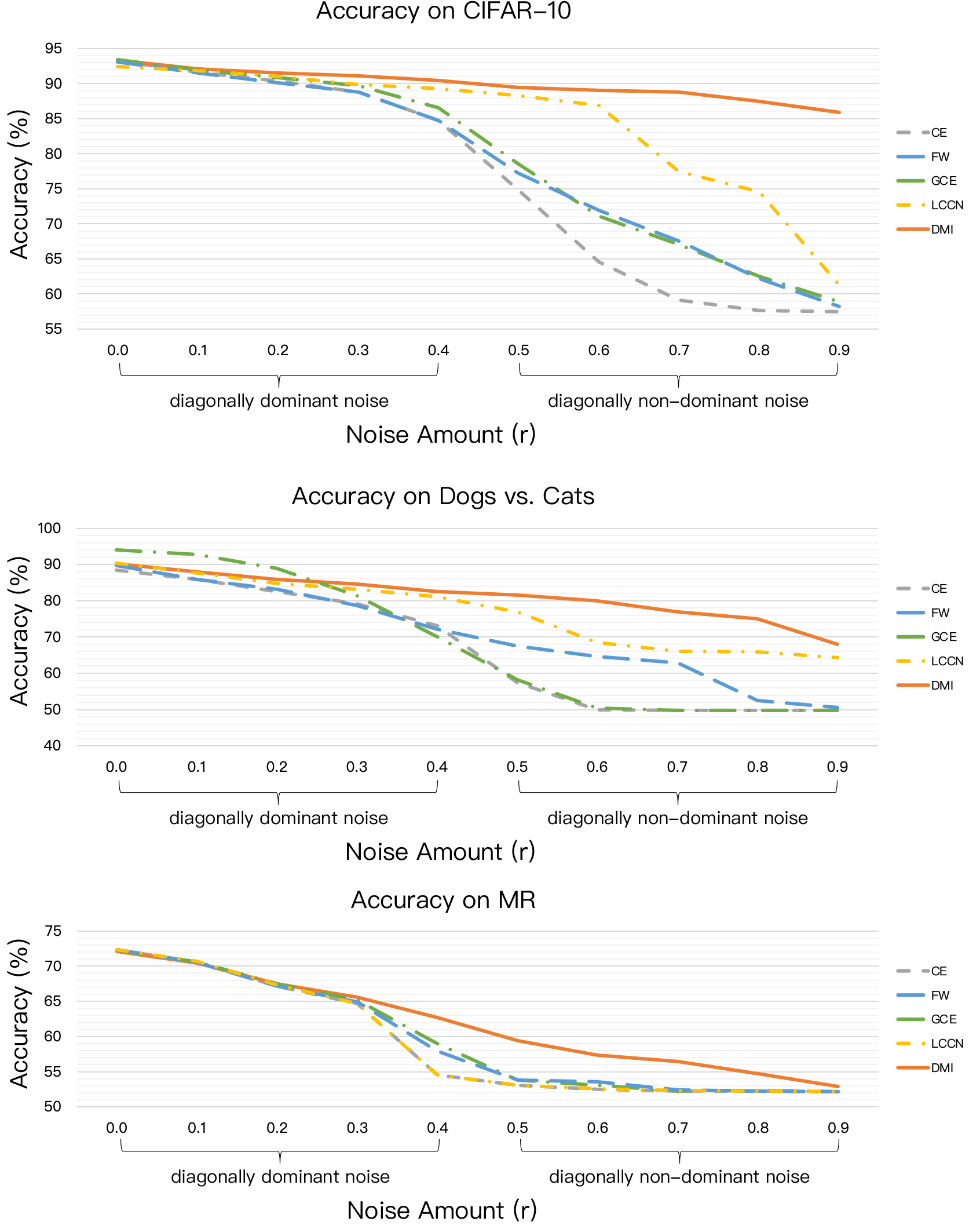}
     \caption{Test accuracy (mean) on CIFAR-10, Dogs vs. Cats and MR.}
     \label{fig:cifar}
\end{figure}
 CIFAR-10 \cite{cifar} consists of 60,000 $32\times32$ color images from $10$ classes, which is split into a $40,000$-image training set, a $10,000$-image validation set and a $10, 000$-image test set. Dogs vs. Cats \cite{dogcat} consists of $25,000$ images from $2$ classes, dogs and cats, which is split into a $12,500$-image training set, a $6,250$-image validation set and a $6,250$-image test set. MR \cite{pang2005seeing} consist of $10,662$ one-sentence movie reviews from $2$ classes, positive and negative, which is split into a $7,676$-sentence training set, a $1,919$-sentence validation set and a $1,067$-sentence test set. We use ResNet-34\cite{he2016deep}, VGG-16\cite{simonyan2014very}, WordCNN\cite{kim2014convolutional} as the classifier for CIFAR-10, Dogs vs. Cats, MR, respectively. SGD with a momentum of $0.9$, a weight decay of $1.0 \times 10^{-4}$ and a learning rate of $1.0 \times 10^{-5}$ is used as the optimizer during training for CIFAR-10 and Dogs vs. Cats. Adam with default parameters and a learning rate of $1.0 \times 10^{-4}$ is used as the optimizer during training for MR. Batch size is set to $128$. We use per-pixel normalization, horizontal random flip and $32 \times 32$ random crops after padding with $4$ pixels on each side as data augmentation for images in CIFAR-10 and Dogs vs Cats. We use the same pre-processing pipeline in \cite{kim2014convolutional} for sentences in MR. Following \cite{yao2019safeguarded}, the noise for CIFAR-10 is added between the similar classes, i.e. truck $\to$ automobile, bird  $\to$ airplane, deer  $\to$ horse, cat  $\to$ dog, with probability $r$. The noise for Dogs vs. Cats is added as cat $\to$ dog with probability $r$. The noise for MR is added as positive $\to$ negative with probability $r$.

As shown in Figure \ref{fig:cifar}, our method \textbf{DMI}  almost outperforms all other methods in every experiment and its accuracy drops slowly as the noise amount increases. 
\textbf{GCE} has great performance in diagonally dominant noises but it fails in diagonally non-dominant noises. This phenomenon matches its theory: it assumes that the label noise is diagonally dominant. \textbf{FW} needs to pre-estimate a noise transition matrix before training and \textbf{LCCN} uses the output of the model to estimate the true labels. 
These tasks become harder as the noise amount grows larger, so their performance also drop quickly as the noise amount increases. 

\subsection{Experiments on Clothing1M}

 Clothing1M \cite{xiao2015learning} is a large-scale real world dataset, which consists of 1 million images of clothes collected from shopping websites with noisy labels from 14 classes assigned by the surrounding text provided by the sellers. It has additional 14k and 10k clean data respectively for validation and test. 
 We use ResNet-50\cite{he2016deep} as the classifier and apply random crop of $224\times224$, random flip, brightness and saturation as data augmentation. SGD with a momentum of $0.9$, a weight decay of $1.0 \times 10^{-3}$ is used as the optimizer during training. 
 We train the classifier with learning rates of $1.0 \times 10^{-6}$ in the first $5$ epochs and $0.5 \times 10^{-6}$ in the second $5$ epochs. Batch size is set to $256$.

 
 \begin{table}[htp]
\caption{Test accuracy (mean) on Clothing1M}
\label{table:clothing1M}
\begin{center}
\begin{tabular}{c c c c c c c c c}
		\toprule
		Method & \textbf{CE} &  \textbf{FW} & \textbf{GCE} & \textbf{LCCN} & \textbf{DMI}	 \\
		\midrule
        Accuracy & $68.94$  & $70.83$ & $69.09$ & $71.63$ & \bm{$72.46$} \\
		\bottomrule
\end{tabular}
\end{center}
\end{table}

As shown in Table~\ref{table:clothing1M}, \textbf{DMI} also outperforms other methods in the real-world setting.



\section{Conclusion and Discussion}
We propose a simple yet powerful loss function, $\mathcal{L}_{\dmi}$, for training deep neural networks robust to label noise. It is based on a generalized version of mutual information, DMI. We provide theoretical validation to our approach and compare our approach experimentally with previous methods on both synthesized and real-world datasets. To the best of our knowledge, $\mathcal{L}_{\dmi}$ is the first loss function that is provably robust to instance-independent label noise, regardless of noise pattern and noise amount, and it can be applied to any existing classification neural networks straightforwardly without any auxiliary information.

In the experiment, sometimes \textbf{DMI} does not have advantage when the data is clean and is outperformed by \textbf{GCE}. \textbf{GCE} does a training optimization on MAE with some hyperparameters while sacrifices the robustness a little bit theoretically. A possible future direction is to employ some training optimizations in our method to improve the performance.

The current paper focuses on the instance-independent noise setting. That is, we assume conditioning on the latent ground truth label $Y$, $\tilde{Y}$ and $X$ are independent. There may exist $Y'\neq Y$ such that $\tilde{Y}$ and $X$ are independent conditioning on $Y'$. Based on our theorem, training using $\tilde{Y}$ is also the same as training using $Y'$. However, without any additional assumption, when we only has the conditional independent assumption, no algorithm can distinguish $Y'$ and $Y$. Moreover, the information-monotonicity of our loss function guarantees that if $Y$ is more informative than $Y'$ with $X$, the best hypothesis learned in our algorithm will be more similar with $Y$ than $Y'$. Thus, if we assume that the actual ground truth label $Y$ is the most informative one, then our algorithm can learn to predict $Y$ rather than other $Y'$s. An interesting future direction is to combine our method with additional assumptions to give a better prediction. 

\subsubsection*{Acknowledgments}
We would like to express our thanks for support from the following research grants: 2018AAA0102004, NSFC-61625201, NSFC-61527804.


\bibliography{ref}

\begin{thebibliography}{10}

\bibitem{cifar}
\text{CIFAR-10} and \text{CIFAR-100} datasets.
\newblock \url{https://www.cs.toronto.edu/~kriz/cifar.html}.
\newblock 2009.

\bibitem{dogcat}
\text{Dogs vs. Cats} competition.
\newblock \url{https://www.kaggle.com/c/dogs-vs-cats}.
\newblock 2013.

\bibitem{brooks2011support}
J~Paul Brooks.
\newblock Support vector machines with the ramp loss and the hard margin loss.
\newblock {\em Operations research}, 59(2):467--479, 2011.

\bibitem{cao2018max}
Peng Cao, Yilun Xu, Yuqing Kong, and Yizhou Wang.
\newblock Max-mig: an information theoretic approach for joint learning from
  crowds.
\newblock 2018.

\bibitem{cheng2017learning}
Jiacheng Cheng, Tongliang Liu, Kotagiri Ramamohanarao, and Dacheng Tao.
\newblock Learning with bounded instance-and label-dependent label noise.
\newblock {\em arXiv preprint arXiv:1709.03768}, 2017.

\bibitem{csiszar2004information}
Imre Csisz{\'a}r, Paul~C Shields, et~al.
\newblock Information theory and statistics: A tutorial.
\newblock {\em Foundations and Trends{\textregistered} in Communications and
  Information Theory}, 1(4):417--528, 2004.

\bibitem{ghosh2017robust}
Aritra Ghosh, Himanshu Kumar, and PS~Sastry.
\newblock Robust loss functions under label noise for deep neural networks.
\newblock In {\em Thirty-First AAAI Conference on Artificial Intelligence},
  2017.

\bibitem{goldberger2016training}
Jacob Goldberger and Ehud Ben-Reuven.
\newblock Training deep neural-networks using a noise adaptation layer.
\newblock 2016.

\bibitem{han2018masking}
Bo~Han, Jiangchao Yao, Gang Niu, Mingyuan Zhou, Ivor Tsang, Ya~Zhang, and
  Masashi Sugiyama.
\newblock Masking: A new perspective of noisy supervision.
\newblock In {\em Advances in Neural Information Processing Systems}, pages
  5836--5846, 2018.

\bibitem{han2018co}
Bo~Han, Quanming Yao, Xingrui Yu, Gang Niu, Miao Xu, Weihua Hu, Ivor Tsang, and
  Masashi Sugiyama.
\newblock Co-teaching: Robust training of deep neural networks with extremely
  noisy labels.
\newblock In {\em Advances in Neural Information Processing Systems}, pages
  8527--8537, 2018.

\bibitem{he2016deep}
Kaiming He, Xiangyu Zhang, Shaoqing Ren, and Jian Sun.
\newblock Deep residual learning for image recognition.
\newblock In {\em Proceedings of the IEEE conference on computer vision and
  pattern recognition}, pages 770--778, 2016.

\bibitem{hendrycks2018using}
Dan Hendrycks, Mantas Mazeika, Duncan Wilson, and Kevin Gimpel.
\newblock Using trusted data to train deep networks on labels corrupted by
  severe noise.
\newblock In {\em Advances in Neural Information Processing Systems}, pages
  10456--10465, 2018.

\bibitem{hjelm2018learning}
R~Devon Hjelm, Alex Fedorov, Samuel Lavoie-Marchildon, Karan Grewal, Adam
  Trischler, and Yoshua Bengio.
\newblock Learning deep representations by mutual information estimation and
  maximization.
\newblock {\em arXiv preprint arXiv:1808.06670}, 2018.

\bibitem{jiang2017mentornet}
Lu~Jiang, Zhengyuan Zhou, Thomas Leung, Li-Jia Li, and Li~Fei-Fei.
\newblock Mentornet: Regularizing very deep neural networks on corrupted
  labels.
\newblock {\em arXiv preprint arXiv:1712.05055}, 4, 2017.

\bibitem{kim2014convolutional}
Yoon Kim.
\newblock Convolutional neural networks for sentence classification.
\newblock {\em arXiv preprint arXiv:1408.5882}, 2014.

\bibitem{Kong2019}
Yuqing Kong.
\newblock Dominantly truthful multi-task peer prediction, with constant number
  of tasks.
\newblock {\em ACM-SIAM Symposium on Discrete Algorithms (SODA20)}, to appear.

\bibitem{kong2018water}
Yuqing Kong and Grant Schoenebeck.
\newblock Water from two rocks: Maximizing the mutual information.
\newblock In {\em Proceedings of the 2018 ACM Conference on Economics and
  Computation}, pages 177--194. ACM, 2018.

\bibitem{krizhevsky2012imagenet}
Alex Krizhevsky, Ilya Sutskever, and Geoffrey~E Hinton.
\newblock Imagenet classification with deep convolutional neural networks.
\newblock In {\em Advances in neural information processing systems}, pages
  1097--1105, 2012.

\bibitem{lee2018cleannet}
Kuang-Huei Lee, Xiaodong He, Lei Zhang, and Linjun Yang.
\newblock Cleannet: Transfer learning for scalable image classifier training
  with label noise.
\newblock In {\em Proceedings of the IEEE Conference on Computer Vision and
  Pattern Recognition}, pages 5447--5456, 2018.

\bibitem{liu2015classification}
Tongliang Liu and Dacheng Tao.
\newblock Classification with noisy labels by importance reweighting.
\newblock {\em IEEE Transactions on pattern analysis and machine intelligence},
  38(3):447--461, 2015.

\bibitem{ma2018dimensionality}
Xingjun Ma, Yisen Wang, Michael~E Houle, Shuo Zhou, Sarah~M Erfani, Shu-Tao
  Xia, Sudanthi Wijewickrema, and James Bailey.
\newblock Dimensionality-driven learning with noisy labels.
\newblock {\em arXiv preprint arXiv:1806.02612}, 2018.

\bibitem{manwani2013noise}
Naresh Manwani and PS~Sastry.
\newblock Noise tolerance under risk minimization.
\newblock {\em IEEE transactions on cybernetics}, 43(3):1146--1151, 2013.

\bibitem{masnadi2009design}
Hamed Masnadi-Shirazi and Nuno Vasconcelos.
\newblock On the design of loss functions for classification: theory,
  robustness to outliers, and savageboost.
\newblock In {\em Advances in neural information processing systems}, pages
  1049--1056, 2009.

\bibitem{menon2016learning}
Aditya~Krishna Menon, Brendan Van~Rooyen, and Nagarajan Natarajan.
\newblock Learning from binary labels with instance-dependent corruption.
\newblock {\em arXiv preprint arXiv:1605.00751}, 2016.

\bibitem{miyato2018virtual}
Takeru Miyato, Shin-ichi Maeda, Masanori Koyama, and Shin Ishii.
\newblock Virtual adversarial training: a regularization method for supervised
  and semi-supervised learning.
\newblock {\em IEEE transactions on pattern analysis and machine intelligence},
  41(8):1979--1993, 2018.

\bibitem{natarajan2013learning}
Nagarajan Natarajan, Inderjit~S Dhillon, Pradeep~K Ravikumar, and Ambuj Tewari.
\newblock Learning with noisy labels.
\newblock In {\em Advances in neural information processing systems}, pages
  1196--1204, 2013.

\bibitem{pang2005seeing}
Bo~Pang and Lillian Lee.
\newblock Seeing stars: Exploiting class relationships for sentiment
  categorization with respect to rating scales.
\newblock In {\em Proceedings of the 43rd annual meeting on association for
  computational linguistics}, pages 115--124. Association for Computational
  Linguistics, 2005.

\bibitem{paszke2017tensors}
A~Paszke, S~Gross, S~Chintala, and G~Chanan.
\newblock Tensors and dynamic neural networks in python with strong gpu
  acceleration, 2017.

\bibitem{patrini2017making}
Giorgio Patrini, Alessandro Rozza, Aditya Krishna~Menon, Richard Nock, and
  Lizhen Qu.
\newblock Making deep neural networks robust to label noise: A loss correction
  approach.
\newblock In {\em Proceedings of the IEEE Conference on Computer Vision and
  Pattern Recognition}, pages 1944--1952, 2017.

\bibitem{ramaswamy2016mixture}
Harish Ramaswamy, Clayton Scott, and Ambuj Tewari.
\newblock Mixture proportion estimation via kernel embeddings of distributions.
\newblock In {\em International Conference on Machine Learning}, pages
  2052--2060, 2016.

\bibitem{reed2014training}
Scott Reed, Honglak Lee, Dragomir Anguelov, Christian Szegedy, Dumitru Erhan,
  and Andrew Rabinovich.
\newblock Training deep neural networks on noisy labels with bootstrapping.
\newblock {\em arXiv preprint arXiv:1412.6596}, 2014.

\bibitem{ren2018learning}
Mengye Ren, Wenyuan Zeng, Bin Yang, and Raquel Urtasun.
\newblock Learning to reweight examples for robust deep learning.
\newblock {\em arXiv preprint arXiv:1803.09050}, 2018.

\bibitem{scott2015rate}
Clayton Scott.
\newblock A rate of convergence for mixture proportion estimation, with
  application to learning from noisy labels.
\newblock In {\em Artificial Intelligence and Statistics}, pages 838--846,
  2015.

\bibitem{seneta2006non}
Eugene Seneta.
\newblock {\em Non-negative matrices and Markov chains}.
\newblock Springer Science \& Business Media, 2006.

\bibitem{shannon}
C.~E. Shannon.
\newblock A mathematical theory of communication.
\newblock {\em Bell System Technical Journal}, 27(3):379--423, 1948.

\bibitem{simonyan2014very}
Karen Simonyan and Andrew Zisserman.
\newblock Very deep convolutional networks for large-scale image recognition.
\newblock {\em arXiv preprint arXiv:1409.1556}, 2014.

\bibitem{sukhbaatar2014training}
Sainbayar Sukhbaatar, Joan Bruna, Manohar Paluri, Lubomir Bourdev, and Rob
  Fergus.
\newblock Training convolutional networks with noisy labels.
\newblock {\em arXiv preprint arXiv:1406.2080}, 2014.

\bibitem{tanaka2018joint}
Daiki Tanaka, Daiki Ikami, Toshihiko Yamasaki, and Kiyoharu Aizawa.
\newblock Joint optimization framework for learning with noisy labels.
\newblock In {\em Proceedings of the IEEE Conference on Computer Vision and
  Pattern Recognition}, pages 5552--5560, 2018.

\bibitem{vahdat2017toward}
Arash Vahdat.
\newblock Toward robustness against label noise in training deep discriminative
  neural networks.
\newblock In {\em Advances in Neural Information Processing Systems}, pages
  5596--5605, 2017.

\bibitem{van2015learning}
Brendan Van~Rooyen, Aditya Menon, and Robert~C Williamson.
\newblock Learning with symmetric label noise: The importance of being
  unhinged.
\newblock In {\em Advances in Neural Information Processing Systems}, pages
  10--18, 2015.

\bibitem{veit2017learning}
Andreas Veit, Neil Alldrin, Gal Chechik, Ivan Krasin, Abhinav Gupta, and Serge
  Belongie.
\newblock Learning from noisy large-scale datasets with minimal supervision.
\newblock In {\em Proceedings of the IEEE Conference on Computer Vision and
  Pattern Recognition}, pages 839--847, 2017.

\bibitem{xiao2017/online}
Han Xiao, Kashif Rasul, and Roland Vollgraf.
\newblock Fashion-mnist: a novel image dataset for benchmarking machine
  learning algorithms, 2017.

\bibitem{xiao2015learning}
Tong Xiao, Tian Xia, Yi~Yang, Chang Huang, and Xiaogang Wang.
\newblock Learning from massive noisy labeled data for image classification.
\newblock In {\em Proceedings of the IEEE conference on computer vision and
  pattern recognition}, pages 2691--2699, 2015.

\bibitem{yao2019safeguarded}
Jiangchao Yao, Hao Wu, Ya~Zhang, Ivor~W Tsang, and Jun Sun.
\newblock Safeguarded dynamic label regression for noisy supervision.
\newblock 2019.

\bibitem{yi2019probabilistic}
Kun Yi and Jianxin Wu.
\newblock Probabilistic end-to-end noise correction for learning with noisy
  labels.
\newblock {\em arXiv preprint arXiv:1903.07788}, 2019.

\bibitem{zhang2016understanding}
Chiyuan Zhang, Samy Bengio, Moritz Hardt, Benjamin Recht, and Oriol Vinyals.
\newblock Understanding deep learning requires rethinking generalization.
\newblock {\em arXiv preprint arXiv:1611.03530}, 2016.

\bibitem{zhang2017mixup}
Hongyi Zhang, Moustapha Cisse, Yann~N Dauphin, and David Lopez-Paz.
\newblock mixup: Beyond empirical risk minimization.
\newblock {\em arXiv preprint arXiv:1710.09412}, 2017.

\bibitem{zhang2018generalized}
Zhilu Zhang and Mert Sabuncu.
\newblock Generalized cross entropy loss for training deep neural networks with
  noisy labels.
\newblock In {\em Advances in Neural Information Processing Systems}, pages
  8778--8788, 2018.

\end{thebibliography}
\bibliographystyle{plain}

\newpage
\appendix
\section{Training Process}

\begin{algorithm}[h]
\caption{The training process with $\mathcal{L}_{\dmi}$}
\label{alg}
\begin{algorithmic}[1]
\REQUIRE A training dataset $\mathcal{D} = \{(x_i, \tilde{y}_i)\}_{i=1}^D$, a validation dataset $\mathcal{V} = \{(x_i, \tilde{y}_i)\}_{i=1}^V$, a classifier modeled by deep neural network $h_\Theta$, the running epoch number $T$, the learning rate $\gamma$ and the batch size $N$.
\STATE Pretrain the classifier $h_\Theta$ on the dataset $\mathcal{D}$ with cross entropy loss
\STATE Initialize the best classifier: $h_{\Theta^*} \leftarrow h_\Theta$
\STATE Randomly sample a batch of samples $\mathcal{B}_v = \{(x_i, \tilde{y}_i)\}_{i=1}^N$ from the validation dataset
\STATE Initialize the minimum validation loss: $L^* \leftarrow \mathcal{L}_{\dmi}(\mathcal{B}_v;h_\Theta)$
\FOR{epoch $t=1 \to T$}
\FOR{batch $b = 1 \to \lceil D/B \rceil$}
\STATE Randomly sample a batch of samples $\mathcal{B}_t = \{(x_i, \tilde{y}_i)\}_{i=1}^N$ from the training dataset
\STATE Compute the training loss: $L \leftarrow \mathcal{L}_{\dmi}(\mathcal{B}_t;h_\Theta)$ 
\STATE Update $\Theta$: $\Theta \leftarrow \Theta - \gamma \frac{\partial{L}}{\partial{\Theta}}$
\ENDFOR
\STATE Randomly sample a batch of samples $\mathcal{B}_v = \{(x_i, \tilde{y}_i)\}_{i=1}^N$ from the validation dataset
\STATE Compute the validation loss: $L \leftarrow \mathcal{L}_{\dmi}(\mathcal{B}_v;h_\Theta)$
\IF{$L < L^*$}
\STATE Update the minimum validation loss: $L^* \leftarrow L$
\STATE Update the best classifier: $h_{\Theta^*} \leftarrow h_\Theta$
\ENDIF
\ENDFOR
\RETURN the best classifier $h_{\Theta^*}$
\end{algorithmic}
\end{algorithm}
\section{Other Proofs}
\begin{claim}
    \[\E \mathbf{U}_{c\tilde{c}}=\Pr[h(X)=c, \tilde{Y} = \tilde{c}]\]
    where \[\mathbf{U}_{c\tilde{c}} :=\frac{1}{N}\sum_{i=1}^N \mathbf{O}_{ci}\mathbf{L}_{i\tilde{c}}= \frac{1}{N}\sum_{i=1}^N h(x_i)_c\mathbbm{1}[\tilde{y}_i = \tilde{c}].\] 
\end{claim}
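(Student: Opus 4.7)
The plan is to prove the claim in two short moves: first use linearity of expectation together with the i.i.d. assumption to reduce the empirical average to a single-sample expectation, and then unpack the definition of the randomized classifier to identify that expectation with the stated joint probability.

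More concretely, first I would observe that by linearity of expectation,
\[
\E \mathbf{U}_{c\tilde{c}} \;=\; \frac{1}{N}\sum_{i=1}^{N} \E\bigl[h(x_i)_c \, \mathbbm{1}[\tilde{y}_i=\tilde{c}]\bigr].
\]
Since each $(x_i,\tilde{y}_i)$ is an i.i.d.\ draw from the joint law of $(X,\tilde{Y})$, every summand equals the single quantity $\E\bigl[h(X)_c\,\mathbbm{1}[\tilde{Y}=\tilde{c}]\bigr]$, so the average collapses to this one expectation. This reduces the claim to showing
\[
\E\bigl[h(X)_c\,\mathbbm{1}[\tilde{Y}=\tilde{c}]\bigr] \;=\; \Pr[h(X)=c,\;\tilde{Y}=\tilde{c}].
\]

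For that identity I would condition on $(X,\tilde{Y})$ and invoke the paper's standing convention that, fixing the input, the internal randomness of the classifier $h$ is independent of everything else. Under this convention $h(x)_c$ is exactly $\Pr[h(X)=c\mid X=x]$, so
\[
\E\bigl[\mathbbm{1}[h(X)=c]\,\bigm|\,X,\tilde{Y}\bigr] \;=\; \E\bigl[\mathbbm{1}[h(X)=c]\,\bigm|\,X\bigr] \;=\; h(X)_c.
\]
Multiplying by $\mathbbm{1}[\tilde{Y}=\tilde{c}]$, taking outer expectation, and using the tower rule gives $\E[h(X)_c\,\mathbbm{1}[\tilde{Y}=\tilde{c}]] = \E[\mathbbm{1}[h(X)=c]\,\mathbbm{1}[\tilde{Y}=\tilde{c}]] = \Pr[h(X)=c,\tilde{Y}=\tilde{c}]$, which combined with the reduction above yields the claim.

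There is no real obstacle here; the only subtle point worth stating cleanly is the interpretation of $h(x)_c$ as a conditional probability versus the random variable $\mathbbm{1}[h(X)=c]$, which is precisely what makes the tower-rule step go through. I would make that step explicit so the reader sees why the soft probabilities $h(x_i)_c$ in $\mathbf{U}$ are the correct unbiased estimators of the hard joint probabilities defining $\mathbf{Q}_{h(X),\tilde{Y}}$.
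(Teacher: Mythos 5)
Your proof is correct and follows essentially the same route as the paper's: reduce the average to a single-sample expectation via the i.i.d.\ assumption, then identify $h(x)_c$ with $\Pr[h(X)=c\mid X=x]$ and use the independence of the classifier's internal randomness to recover the joint probability. The only cosmetic difference is that you phrase the final step via conditioning and the tower rule where the paper writes out the explicit sum over $x$; the substance is identical.
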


\begin{proof}

Recall that the randomness of $h(X)$ comes from both $h$ and $X$ and the randomness of $h$ is independent of everything else. 

\begin{align*}
	\E \mathbf{U}_{c\tilde{c}} &= \E \frac{1}{N}\sum_{i=1}^N h(x_i)_c\mathbbm{1}[\tilde{y}_i=\tilde{c}] \\ \tag{i.i.d. samples}
	&=\E_{X,\tilde{Y}} h(X)_c \mathbbm{1}[\tilde{Y} = \tilde{c}]\\
	&=\sum_{x,\tilde{y}}\Pr[X=x,\tilde{Y}=\tilde{y}] h(x)_c \mathbbm{1}[\tilde{y} = \tilde{c}]\\
	&=\sum_{x}\Pr[X=x,\tilde{Y}=\tilde{c}] h(x)_c\\ \tag{definition of randomized classifier}
	&=\sum_{x}\Pr[X=x,\tilde{Y}=\tilde{c}] \Pr[h(X)=c|X=x]\\ \tag{fixing $x$, the randomness of $h$ is independent of everything else}
	&=\sum_{x}\Pr[X=x,\tilde{Y}=\tilde{c}] \Pr[h(X)=c|X=x,\tilde{Y} = \tilde{c}]\\
	 &=\Pr[h(X)=c, \tilde{Y} = \tilde{c}].
\end{align*}

\end{proof}

\begin{claim}
    Under the the performance measure based on Shannon mutual information, the measurement based on noisy labels $\text{MI}(h(X),\tilde{Y})$ is not consistent with the measurement based on true labels $\text{MI}(h(X),Y)$. \ie, for every two classifiers $h$ and $h'$, 
\[ I(h(X),Y)>I(h'(X),Y) \notleftright I(h(X),\tilde{Y})> I(h'(X),\tilde{Y}). \]
\end{claim}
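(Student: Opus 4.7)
Since the claim is existential --- it asserts the bidirectional implication fails for \emph{some} pair of classifiers, not for all --- the natural approach is to exhibit an explicit counterexample: a joint distribution $Q_{X,Y}$, an invertible transition matrix $\mathbf{T}_{Y\to\tilde{Y}}$ satisfying Assumptions~3.1 and~3.2, and two classifiers $h,h'$ for which the Shannon MI ordering with $Y$ is strictly reversed against that with $\tilde{Y}$. The plan is to work with three classes, $\mathcal{C}=\{1,2,3\}$, under a skewed prior (for instance $\Pr[Y=1]=0.2$ and $\Pr[Y=2]=\Pr[Y=3]=0.4$), so that the entire verification reduces to manipulating $2\times 3$ matrices and can be carried out in closed form.

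For the two classifiers I would pick deterministic functions of $Y$ that induce \emph{different} binary partitions of $\mathcal{C}$: let $h$ split $\{1\}$ from $\{2,3\}$, and let $h'$ split $\{1,2\}$ from $\{3\}$. Neither partition refines the other, so $h(X)$ is not a deterministic function of $h'(X)$ nor vice versa --- if it were, the data processing inequality would pin down the MI ordering with $\tilde{Y}$ from the ordering with $Y$, and no reversal could occur. Since both classifiers are deterministic in $Y$, one has $\text{MI}(h(X);Y)=H(h(X))$ and similarly for $h'$, so the skewed prior above makes $h'$'s output distribution $(0.6,0.4)$ strictly more balanced than $h$'s $(0.2,0.8)$, yielding $\text{MI}(h'(X);Y)>\text{MI}(h(X);Y)$ on the clean side.

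The crucial step is the choice of $\mathbf{T}$, which must reverse this ordering on the noisy side. I would take $\mathbf{T}$ that \emph{respects} $h$'s partition --- class $1$ is mapped to $1$ with high probability, while classes $2$ and $3$ are mixed mostly with each other --- but \emph{collapses} across $h'$'s partition, so the ``$Y\in\{1,2\}$ vs.\ $Y=3$'' distinction is largely destroyed after noise. Concretely, a nearly block-diagonal matrix whose first row is close to $(1,0,0)$ and whose second and third rows are both close to $(0,1/2,1/2)$ does the job. The main (mild) obstacle is keeping $\det(\mathbf{T})\neq 0$ while making those last two rows nearly equal; this is handled by a small asymmetric perturbation between them, which preserves invertibility yet leaves the mixing effect essentially intact.

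The proof is then completed by direct computation. Writing the $2\times 3$ matrices $\mathbf{Q}_{h(X),Y}$ and $\mathbf{Q}_{h'(X),Y}$, right-multiplying each by $\mathbf{T}$ to obtain $\mathbf{Q}_{h(X),\tilde{Y}}$ and $\mathbf{Q}_{h'(X),\tilde{Y}}$, and evaluating each Shannon MI via $H(h(X))+H(Y^{\ast})-H(h(X),Y^{\ast})$ for $Y^{\ast}\in\{Y,\tilde{Y}\}$, one obtains the strict reversal $\text{MI}(h'(X);Y)>\text{MI}(h(X);Y)$ but $\text{MI}(h(X);\tilde{Y})>\text{MI}(h'(X);\tilde{Y})$. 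This contradicts both directions of the would-be equivalence, completing the proof.
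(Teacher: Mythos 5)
Your proposal is correct in spirit and in substance: the claim is indeed established by exhibiting a single counterexample, and your construction does work. The paper takes the same high-level route but a different concrete one: it stays in the binary setting ($C=2$), writes down two explicit $2\times 2$ joint distributions $\mathbf{Q}_{h(X),Y}$, $\mathbf{Q}_{h'(X),Y}$ and a transition matrix $\mathbf{T}_{Y\to\tilde{Y}}=\bigl[\begin{smallmatrix}0.8&0.2\\0.4&0.6\end{smallmatrix}\bigr]$, and simply evaluates the four mutual informations numerically to exhibit the reversal. Your three-class construction is more structured and more explanatory: by taking $h,h'$ to be deterministic non-nested partitions of $\mathcal{C}$ and choosing $\mathbf{T}$ to preserve one partition while nearly collapsing the other, you make the \emph{mechanism} of the reversal visible rather than just its existence, and your remark that nested partitions could never produce a reversal (by data processing) is a nice piece of insight absent from the paper. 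I checked your construction: with prior $(0.2,0.4,0.4)$ and $\mathbf{T}$ equal to the identity on class $1$ and $\bigl[\begin{smallmatrix}1/2+\epsilon&1/2-\epsilon\\1/2-\epsilon&1/2+\epsilon\end{smallmatrix}\bigr]$ on classes $2,3$, one gets $\det\mathbf{T}=2\epsilon\neq 0$, $\text{MI}(h(X),\tilde{Y})=\text{MI}(h(X),Y)=H(0.2,0.8)\approx 0.50$ nats, while $\text{MI}(h'(X),Y)=H(0.6,0.4)\approx 0.67$ nats collapses to about $0.12$ nats after the noise, so the strict reversal holds for all small $\epsilon>0$. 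Two minor points to tighten: you should state explicitly that $Q_{X,Y}$ is chosen so that $Y$ is a deterministic function of $X$ (otherwise ``deterministic functions of $Y$'' are not literally classifiers of $X$), and since the proposal stops at ``direct computation,'' the final write-up should actually display the resulting numbers or the limiting argument, as the paper does.
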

\begin{proof}
    See a counterexample: 

    The matrix format of the joint distribution $Q_{h(X), Y}$ is $\mathbf{Q}_{h(X), Y} = \begin{bmatrix}
    0.1  & 0.4 \\
    0.2 & 0.3 \\
    \end{bmatrix}$, the matrix format of the joint distribution $Q_{h'(X), Y}$ is $\mathbf{Q}_{h'(X), Y} = \begin{bmatrix}
    0.2  & 0.6 \\
    0.1 & 0.1 \\
    \end{bmatrix}$ and the noise transition matrix is $\mathbf{T}_{Y\to \tilde{Y}} = \begin{bmatrix}
    0.8  & 0.2 \\
    0.4 & 0.6 \\
    \end{bmatrix}
    $. 
    
    Given these conditions, $\mathbf{Q}_{h(X), \tilde{Y}} = \begin{bmatrix}
    0.24  & 0.26 \\
    0.28 & 0.22 \\
    \end{bmatrix}
    $ and  $\mathbf{Q}_{h'(X), \tilde{Y}} = \begin{bmatrix}
    0.40  & 0.40 \\
    0.12 & 0.08\\
    \end{bmatrix}
    $.
    
    If we use Shannon mutual information as the performance measure,  \[\text{MI}(h(X),Y) = 2.4157 \times 10^{-2}, \]
    \[\text{MI}(h'(X),Y) =  2.2367 \times 10^{-2}, \]
    \[\text{MI}(h(X),\tilde{Y}) = 3.2085 \times 10^{-3},\]
    \[\text{MI}(h'(X),\tilde{Y}) = 3.2268 \times 10^{-3}.\] Thus we have $\text{MI}(h(X),Y) >
    \text{MI}(h'(X),Y)$ but $\text{MI}(h(X),\tilde{Y}) < \text{MI}(h'(X),\tilde{Y})$.
    
    Therefore, $\text{MI}(h(X),Y)>\text{MI}(h'(X),Y) \notleftright \text{MI}(h(X),\tilde{Y})> \text{MI}(h'(X),\tilde{Y})$.

\end{proof}
\section{Noise Transition Matrices}\label{appendix:transition}
Here we list the explicit noise transition matrices.

On Fashion-MNIST, case (1): $\mathbf{T}_{Y \to \tilde{Y}} = \begin{bmatrix}
    1-\frac{r}{2}  & \frac{r}{2} \\
    \frac{r}{2} & 1-\frac{r}{2}\\
    \end{bmatrix}
    $; 
    
On Fashion-MNIST, case (2): $\mathbf{T}_{Y \to \tilde{Y}} = \begin{bmatrix}
    1-r  & r \\
    0 & 1\\
    \end{bmatrix}
    $; 
    
On Fashion-MNIST, case (3): $\mathbf{T}_{Y \to \tilde{Y}} = \begin{bmatrix}
    1  & 0 \\
    r & 1-r\\
    \end{bmatrix}
    $;
    
On CIFAR-10, $\mathbf{T}_{Y \to \tilde{Y}} = \begin{bmatrix}
    1 & 0 & 0& 0& 0& 0& 0& 0& 0& 0\\
    0 & 1& 0& 0& 0& 0& 0& 0& 0& 0\\
    r & 0& 1-r& 0& 0& 0& 0& 0& 0& 0\\
    0 & 0& 0& 1-r& 0& r& 0& 0& 0& 0\\
    0 & 0& 0& 0& 1-r& 0& 0& r& 0& 0\\
    0 & 0& 0& 0& 0& 1& 0& 0& 0& 0\\
    0 & 0& 0& 0& 0& 0& 1& 0& 0& 0\\
    0 & 0& 0& 0& 0& 0& 0& 1& 0& 0\\
    0 & 0& 0& 0& 0& 0& 0& 0& 1& 0\\
    0 & r& 0& 0& 0& 0& 0& 0& 0& 1-r\\
    \end{bmatrix}
    $;
    
On Dogs vs. Cats, $\mathbf{T}_{Y \to \tilde{Y}} = \begin{bmatrix}
    1  & 0 \\
    r & 1-r\\
    \end{bmatrix}
    $.

On MR, $\mathbf{T}_{Y \to \tilde{Y}} = \begin{bmatrix}
    1  & 0 \\
    r & 1-r\\
    \end{bmatrix}
    $.
    
For Fashion-MNIST case (1), $r = 0.0, 0.1, 0.2, 0.3, 0.4, 0.5, 0.6, 0.7, 0.8, 0,9$ are diagonally dominant noises. For other cases, $r = 0.0, 0.1, 0.2, 0.3, 0.4$ are diagonally dominant noises and $r = 0.5, 0.6, 0.7, 0.8, 0,9$ are diagonally non-dominant noises.
\section{Additional Experiments}\label{appendix:results}
For clean presentation, we only include the comparison between \textbf{CE} and \textbf{DMI} in section 5.1 and attach comparisons with other methods here. In the experiments in section 5.2, noise patterns are divided into two main cases, diagonally dominant and diagonally non-dominant and uniform noise is a special case of diagonally dominant noise. Thus, we did not emphasize the uniform noise results in section 5.2 and attach them here.
\begin{figure}[h!]
    \centering
    \includegraphics[width=5.5in]{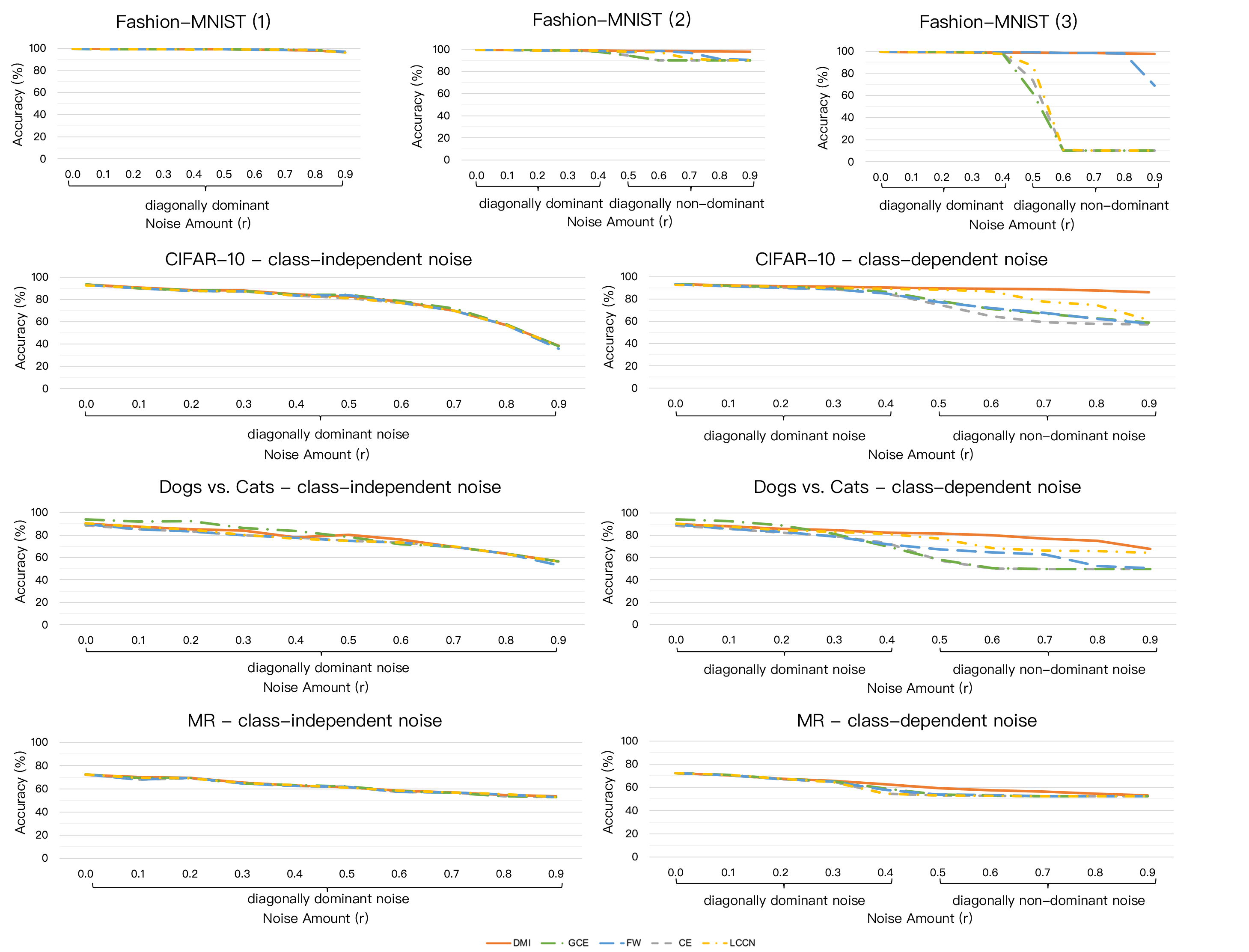}
    \caption{Additional experiments}
    \label{fig:my_label}
\end{figure}

We also compared our method to \textbf{MentorNet} (the sample reweighting loss \cite{jiang2017mentornet}) and \textbf{VAT} (the regularization loss \cite{miyato2018virtual}). For clean presentation, we only attach them here. Our method still outperforms these two additional baselines in most of the cases. \footnote{\textbf{VAT} can not be applied to MR dataset.} 

\begin{table}[!htbp]\scriptsize
\caption{Test accuracy on CIFAR-10 (mean $\pm$ std. dev.)}
\label{table:ci}
\begin{center}
\begin{tabular}{c c c c c c c c c c}
		\toprule
		$r$ & \textbf{CE} &\textbf{MentorNet} & \textbf{VAT} & \textbf{FW} & \textbf{GCE} & \textbf{LCCN} & \textbf{DMI}	 \\
		\midrule
        $0.0$  &$93.29 \pm 0.18$ &$92.13\pm 1.22$&$92.25\pm 0.1$& $93.12\pm 0.16$ & $\bm{93.43 \pm 0.24}$ & $92.47\pm 0.36$ & $93.37\pm 0.20$  \\
        $0.1$ & $91.63 \pm 0.32 $ &$91.35\pm 0.83$&$91.4\pm 0.68$& $91.54\pm 0.15$ & $91.96\pm 0.09$ & $91.88\pm 0.23$ & $\bm{92.08\pm 0.08}$ \\
        $0.2$ & $90.36\pm 0.24$  &$90.06\pm 0.52$&$91.19\pm 0.31$& $90.10\pm 0.22$ & $90.87\pm 0.16$ & $91.05\pm 0.43$ &  $\bm{91.51\pm 0.17}$\\
        $0.3$ & $88.79\pm 0.40$  &$88.47\pm 0.61$&$88.97\pm 0.41$& $88.77\pm 0.36$ & $89.67\pm 0.21$ & $89.88\pm 0.40$ &  $\bm{91.12\pm 0.30}$\\
        $0.4$ & $84.76\pm 0.98$ &$84.12\pm 1.29$&$84.09\pm 0.46$& $84.78\pm 1.53$ & $86.6\pm 0.47$ & $89.33\pm 0.58$ & $\bm{90.41\pm 0.32}$\\
        $0.5$ & $74.81\pm 3.37$ &$78.43\pm 0.39$&$75.07\pm 0.66$& $77.2\pm 4.19$ & $78.53\pm 1.93$ & $88.30 \pm 0.38 $& $\bm{89.45\pm 0.99}$ \\
        $0.6$ & $64.61\pm 0.72$ &$71.33\pm 0.13$&$65.02\pm 0.63$& $71.98\pm 1.83$ & $71.14 \pm0.78$ & $86.89\pm0.51$ & $\bm{89.03\pm0.69}$ \\
        $0.7$ & $59.15\pm0.64$  &$66.28\pm 0.76$&$58.92\pm 1.49$& $67.59\pm1.64$ & $67.10\pm0.82$ & $77.50\pm0.60$ & $\bm{88.82\pm0.89}$\\
        $0.8$ & $57.65\pm0.28$&$65.67\pm 0.57$&$57.78\pm 0.32$& $62.22\pm1.80$ & $62.56\pm0.72$ & $74.62\pm1.16$ & $\bm{87.46\pm0.79}$ \\
        $0.9$ & $57.46\pm0.08$  &$59.49\pm 0.40$&$57.19\pm 1.25$& $58.23\pm0.25$ & $58.91\pm0.46$ & $61.32\pm1.87$ &  $\bm{85.94\pm0.74}$\\
		\bottomrule
\end{tabular}
\end{center}
\end{table}

\begin{table}[!htbp]\scriptsize
\caption{Test accuracy on Dogs vs. Cats (mean $\pm$ std. dev.)}
\label{table:dc}
\begin{center}
\begin{tabular}{c c c c c c c c c c}
		\toprule
		$r$ & \textbf{CE} & \textbf{MentorNet}& \textbf{VAT} & \textbf{FW} & \textbf{GCE} & \textbf{LCCN} & \textbf{DMI}	 \\
		\midrule
        $0.0$ & $88.50 \pm 0.60$ & $88.76\pm 0.32$ &$88.32\pm 0.76$& $89.66\pm0.63$ & $\bm{94.06\pm0.41}$ & $ 90.41\pm0.38$ & $ 90.21\pm0.27$  \\
        $0.1$ & $85.87\pm 0.79$ & $87.33\pm 0.51$ &$87.04\pm 1.53$& $85.87\pm0.54$ & $\bm{92.75 \pm 0.50 }$ & $87.72\pm0.46$ & $87.99\pm0.41$ \\
        $0.2$ & $82.50 \pm 0.96$ & $82.08\pm 0.60$ &$82.36\pm 0.78$ & $83.20\pm 0.83$ & $\bm{88.94\pm0.70}$ & $84.80\pm0.93$ &  $85.88\pm0.83$\\
        $0.3$ & $79.11\pm 1.08$  & $80.14\pm 0.99$ &$78.55\pm 0.76$& $78.71\pm 1.97$ & $81.34\pm3.23$ & $83.16\pm1.18$ &  $\bm{84.61\pm0.98}$\\
        $0.4$ & $73.05\pm 0.20$ & $72.24\pm 0.75$ &$74.72\pm 0.57$ & $72.13\pm 2.42$ & $70.13\pm3.59$ & $81.06\pm1.05$ & $\bm{82.52\pm1.01}$\\
        $0.5$ & $57.46 \pm 3.71$ & $63.62\pm 0.39$ &$66.83\pm 0.75$& $67.50\pm 3.99$ & $58.31\pm1.19$ & $76.88\pm2.97$& $\bm{81.50\pm1.19}$ \\
        $0.6$ & $49.98 \pm 0.15$ & $63.07\pm 0.93$ &$55.02\pm 1.41$& $64.58\pm 5.21$ & $50.39\pm0.47$ & $68.50\pm3.40$ & $\bm{80.00\pm0.72}$ \\
        $0.7$ & $49.83\pm 0.09$ & $52.38\pm 0.66$ &$54.18\pm 0.72$ & $62.87\pm6.82$ & $49.76\pm0.00$ & $ 66.10\pm2.45$ & $\bm{77.01\pm1.07}$\\
        $0.8$ & $49.80\pm0.03$& $51.42\pm 0.75$ &$51.88\pm 0.25$& $52.44\pm1.52$ & $49.76\pm0.00$ & $65.93\pm2.76$ & $\bm{75.01\pm0.88}$ \\
        $0.9$ & $49.77\pm0.01$ & $51.31\pm 0.20$ &$51.69\pm 0.70$ & $50.56\pm1.32$ & $49.76\pm0.00$ & $64.29\pm1.46 $ &  $\bm{67.96\pm1.45}$\\
		\bottomrule
\end{tabular}
\end{center}
\end{table}

\begin{table}[!htbp]\scriptsize
\caption{Test accuracy on MR (mean $\pm$ std. dev.)}
\label{table:mr}
\begin{center}
\begin{tabular}{c c c c c c c c c}
		\toprule
		$r$ & \textbf{CE} & \textbf{MentorNet} &\textbf{FW} & \textbf{GCE} & \textbf{LCCN} & \textbf{DMI}	 \\
		\midrule
        $0.0$ & $72.35 \pm 0.00$ & $\bm{72.44 \pm 0.32}$ & $72.35\pm0.00$ & $72.24\pm0.10$ & $72.35\pm 0.00$ & $ 72.07 \pm 0.00$  \\
        
        $0.1$ & $70.51 \pm 0.97$ &$69.54\pm0.19$& $70.49\pm0.94$ & $\bm{70.58\pm1.03}$ & $70.72 \pm 1.02$ & $70.42 \pm 0.73$ \\
        
        $0.2$ & $67.12 \pm 1.19$  & $66.72\pm0.98$&$67.14 \pm 1.21$ & $\bm{67.48\pm1.02}$ & $67.33 \pm 1.61$ &  $67.44 \pm 1.22$\\
        
        $0.3$ & $64.68\pm 1.22$  & $65.13\pm0.13$& $64.92 \pm 1.37$ & $65.19\pm1.09$ & $64.65 \pm 1.58$ &  $\bm{65.62 \pm 1.04}$\\
        
        $0.4$ & $54.52\pm 1.74$ & $54.73\pm1.01$& $57.89 \pm 2.51 $ & $58.97\pm1.77$ & $54.52 \pm 1.74$ & $\bm{62.67 \pm 2.27}$\\
        
        $0.5$ & $53.08 \pm 0.64$ &$53.70\pm0.55$& $53.83 \pm 0.68$ & $53.81\pm2.04$ & $53.08 \pm 0.64$& $\bm{59.40 \pm 0.63}$ \\
        
        $0.6$ & $52.52 \pm 0.57$ &$53.15\pm0.97$& $53.58 \pm 0.35$ & $53.08\pm1.46$ & $52.54\pm 0.59$ & $\bm{57.38 \pm 0.81}$ \\
        
        $0.7$ & $52.28\pm 0.12$  &$52.76\pm0.98$& $52.38 \pm0.19$ & $52.22\pm0.10$ & $ 52.29 \pm 0.13$ & $\bm{56.44 \pm 0.78}$\\
        
        $0.8$ & $52.26\pm0.08$& $52.29\pm0.25$& $52.24\pm0.08$ & $52.31\pm0.15$ & $52.25 \pm 0.08$ & $\bm{54.69 \pm 0.65}$ \\
        
        $0.9$ & $52.20\pm0.00$ &$52.20\pm0.56$&$52.16\pm0.14$ & $52.20\pm0.07$ & $52.20\pm 0.00$ & $\bm{52.88\pm0.33}$\\
		\bottomrule
\end{tabular}
\end{center}
\end{table}

\begin{table}[htp]
\caption{Test accuracy (mean) on Clothing1M}
\label{table:clothing1M}
\begin{center}
\begin{tabular}{c c c c c c c c c}
		\toprule
		Method & \textbf{CE} & \textbf{MentorNet} & \textbf{VAT} & \textbf{FW} & \textbf{GCE} & \textbf{LCCN} & \textbf{DMI}	 \\
		\midrule
        Accuracy & $68.94$ & $69.30$& $69.57$ & $70.83$ & $69.09$ & $71.63$ & \bm{$72.46$} \\
		\bottomrule
\end{tabular}
\end{center}
\end{table}

\end{document}